\documentclass{article}
\usepackage{mathptmx}
\usepackage{soul}\setuldepth{article}

\usepackage{mdframed}

\usepackage{graphicx}
\usepackage{xcolor}
\usepackage{amsmath}
\usepackage{makecell}
\usepackage{amsmath}
\usepackage{amsthm}

\usepackage{booktabs} 
\usepackage{multirow}
\usepackage{array}
\usepackage[hidelinks]{hyperref}

\usepackage{amssymb}
\usepackage[english]{babel}

\usepackage[letterpaper,top=2cm,bottom=2cm,left=3cm,right=3cm,marginparwidth=1.75cm]{geometry}

\usepackage{amsmath}
\usepackage{graphicx}

\newtheorem{example}{Example}
\newtheorem{definition}{Definition}

\newtheorem{theorem}{Theorem}
\newtheorem{proposition}{Proposition}
\newtheorem{lemma}{Lemma}

\newcommand{\asp}{\text{ \it ASPIC$^+$ }} 
\newcommand{\lesac}{\text{\it LeSAC }}  
\newcommand{\lsc}{\text{\it LeSAC}}  
\newcommand{\prem}{\text{\tt Prem}} 
\newcommand{\conc}{\text{\tt Conc}} 
\newcommand{\sub}{\text{\tt Sub}} 
 
\newcommand{\sr}{\text{ \tt StRules}}

\newcommand{\norms}{\text{\tt Norms}}

\newcommand{\lp}{\text{\tt LastPrin}}

\title{Explaining Non-monotonic Normative Reasoning using Argumentation Theory with Deontic Logic}
\author{Zhe Yu\\Institute of Logic and Cognition, Department of Philosophy, Sun Yat-sen University \\\emph{yuzh28@mail.sysu.edu.cn}
\\Yiwei Lu
\\School of Law, Old College, University of Edinburgh}

\begin{document}
\maketitle

\begin{abstract}
	In our previous research, we provided a reasoning system (called \textit{LeSAC}) based on argumentation theory to provide legal support to designers during the design process. Building on this, this paper explores how to provide designers with effective explanations for their legally relevant design decisions. We extend the previous system for providing explanations by specifying norms and the key legal or ethical principles for justifying actions in normative contexts. Considering that first-order logic has strong expressive power, in the current paper we adopt a first-order deontic logic system with deontic operators and preferences. We illustrate the advantages and necessity of introducing deontic logic and designing explanations under \lesac by modelling two cases in the context of autonomous driving. 
	
	In particular, this paper also discusses the requirements of the updated \lesac to guarantee rationality, and proves that a well-defined \lesac can satisfy the rationality postulate for rule-based argumentation frameworks. This ensures the system's ability to provide coherent, legally valid explanations for complex design decisions.
\end{abstract}

\section{Introduction}\label{sec:intro}

Modelling and applying legal reasoning has been central to AI and Law technology. The goal is to ensure intelligent systems comply with the law, but translating laws, designed for judges with extensive context, into rules guiding machines with limited context is challenging. In previous research~\cite{lujurix22,lujurisin2023,lu2023legal}, we took a designer's perspective to support law-compliant design decisions. One key issue is ``deontological collapse", where the distinction between ``ought" and ``can refrain from" in human interpretation may be lost when translated into computational instructions. In the current paper, we introduce elements of deontic logic to address this issue and help explain design decisions in the midst of conflicting values.

This paper takes the legal context as a typical application context for normative reasoning. The following hypothetical example may not fully comply with the legal provisions and detailed requirements, but it may help to illustrate what we mean.

\begin{example}\label{exp:dog}
    A car has been involved in an accident with a driver and a pet dog in it, then who should we protect first? While individual emotional preferences may vary, the guidance of the law is clear: the driver should be protected first. More importantly, the law is able to give an adequate and acceptable legal explanation. For example, although both are lives, the safety of humans as citizens is prioritised over pets, which are legally defined as property. 
    
    However, What if the human driver is replaced with an autonomous driving system? Obviously, prioritising the protection of the driver, i.e. the vehicle's driving system, in this scenario would bring a lot of debates. Moreover, the law cannot gain public acceptance by simple compulsion, such as following past norms to require vehicles to protect the driving system. This is essentially because the original explanation of the law no longer applies. Autonomous driving systems are not citizens, nor are they lives. Regardless of which of the two the law ultimately deems should be prioritised for protection, a new and reasonable explanation needs to be given.

    And also, what if there's only one passenger on the car with a pet dog? Should we distribute the responsibilities of a driver to him in accidents? In such case should we require the passenger to keep sober like a human driver? These uncertainty and inconsistency above wont release the duty of designers to perform specialized designs, but add more stress to clarify why certain designs are made.
\end{example}

Legal explanation is essential for ensuring consistency, fairness, and public trust in legal practice. In legal disputes, both parties must present sound legal explanations for their claims and counter the opposing side's reasoning. Judges must also provide clear, understandable reasons for their rulings, such as the relevance of evidence, applicable legal provisions, and case facts. These explanations not only support communication and the expression of claims but also ensure that the justice system is transparent and trusted by the public. Unlike technical processes, such as facial recognition systems, people demand transparency in legal decisions. Without clear explanations, legal outcomes are less likely to be accepted by the public.

Legal explanations typically expected to include the following elements~(\cite{MILLER20191}): (1) Comparisons with alternative possibilities, particularly conflicting ones; (2) Selectivity, highlighting essential points and avoiding unnecessary detail; (3) Probabilities, though less convincing, should be supplemented with typical scenarios; (4) Sociability, reflecting the beliefs of the involved parties. These features help ensure that legal explanations are persuasive and understandable.

For autonomous vehicle (AV) designers and manufacturers, legal explanations play a crucial role in gaining trust and regulatory approval. However, they face unique challenges. Unlike traditional legal cases where explanations are needed after disputes arise, AV manufacturers need legal clarity during the design phase to avoid penalties and demonstrate compliance. The goal of legal explanations in this context is to show adherence to legal norms and public values, rather than to prevail in legal disputes. Furthermore, AV designers must navigate uncertainties in laws not yet fully adapted to automation. Explanations must address these inconsistencies between human-specific regulations and AVs.

To meet these challenges, we propose that legal explanations for AV design should include: (1) The core reasoning behind the accepted action; (2) Reasons for rejecting alternative actions, particularly conflicting ones; (3) The (legal or ethical) principles and values justifying the accepted action, such as prioritizing life over property; and (4) Clear statements of rights and obligations, indicating what is required, optional, or prohibited. Additionally, legal explanations should avoid relying solely on statistical probabilities, as these are insufficient for gaining trust in new, untested technologies.

Given the unique requirements for AV legal explanations, we see potential in applying the \lesac framework, which we have developed in previous research~\cite{lujurix22,lujurisin2023,lu2023legal}. To further enhance clarity, we introduce deontic logic operators to better express the underlying legal reasoning. The rest of this paper will be developed in the following format: in \S\ref{sec:lesac} we show how \lesac preforms reasoning after the introduction of the formal language based on first-order deontic logic; then we present the proofs of the rationality postulates. In \S\ref{sec:exp&case} we show how \lesac constructs explanations, with case studies to demonstrate our explanation.  Finally, in \S\ref{sec:conc} we conclude the paper and discuss future work.

\section{Non-monotonic Reasoning in Normative Contexts
}\label{sec:lesac}

As a basis for achieving non-monotonic reasoning, we first introduce the definition of \lsc.

\subsection{Logic and argumentation theory}\label{subsec:AT}
\textit{LeSAC}, which employs the basic mechanism of the structured argumentation framework $ASPIC^+$~\cite{prakken10,MP13} as its reasoning method, can adopt any specified logical language as the basis for knowledge representation. 
In previous work \cite{luyu20dl,lujurix22}, we translate the assertions ($ABox$) and the terminologies ($TBox$) of a Description Logic-based legal ontology as sets of knowledge and rules of the system\footnote{a mapping table is given in \cite{luyu20dl,yl22arxiv}}. 
Such a combination takes advantage of the argumentation system's ability to handle non-monotonic reasoning, allowing users to realize reasoning in uncertain and inconsistent legal contexts. \lesac can clearly reflect legal information of construction arguments, achieving conclusions, and which arguments defended/justified them. It also supports reasoning under preference based on priority orders of legal/ethical principles. 
It therefore has the potential to provide effective legal explanations under the criteria described in the previous section.

However, in the legal explanation of practical reasoning, legal norms generally have the feature of being defeasible, that is, the application of legal rules is uncertain. Only rules that can be logically deduced are guaranteed to be strict. Furthermore, manufacturers and designers will pay special attention to the lever of permissive or obligatory of certain actions. Description logics do not provide expressions for these aspects, therefore do not support to distinguish most defeasible rules from strict rules based on the logic system itself. It also lack the expression of permission and obligation. Both make the explanations given to designers and manufacturers not sufficient and clear enough.

Given the fact that deontic logic enable the formalization of what is obligatory to be done in ethical or legal contexts and provides tools for handling rules, permissions, and obligations in reasoning systems, and that first-order logic has strong expressive power, in the current paper we adopt a first-order deontic logic system with deontic operators ($O$ for obligation, $P$ for permission) and preferences. The formal language is defined inductively as follows ($t$ denotes a term and $\alpha$ denotes a formula):
\[
t ::= x
\mid c
\mid f(t_1, \ldots, t_n)
\]
where $x$ and $c$ represent variables and constants respectively, $f$ represents a function. Let $A$ represent a predicate and \( p \) an atomic formula ($A (t_1, \ldots, t_n)$). 
 \[\alpha ::= p
\mid\bot
\mid \neg\alpha  
\mid \alpha_1 \wedge \alpha_2 
\mid \alpha_1 \vee \alpha_2 
\mid \alpha_1 \rightarrow \alpha_2 
\mid \forall x \alpha  
\mid \exists x  \alpha   
\mid O \alpha
\mid P \alpha
\mid Pref(\alpha_1, \alpha_2)
\]
where 
\(\bot \) is falsehood, \( \wedge, \vee, \rightarrow, \neg \) are logical connectives, \( O \alpha \) represents that \( \alpha \) is obligatory, \( P \alpha \) represents that \( \alpha \) is permitted, and $Pref(\alpha_1, \alpha_2)$ expresses that \( \alpha_1 \) is preferred over \( \alpha_2 \).  $(\alpha_1 \rightarrow \alpha_2)\wedge(\alpha_2 \rightarrow \alpha_1)$ is denoted as $\alpha_1 \leftrightarrow \alpha_2$. 

 In order to express normative reasoning with preferences and to capture conflict relations reasonably, the following axioms and rules are necessary (including the basic rules of classical first-order logic).

    \begin{itemize}
        \item[$(1)$] Axiom Schemes: 
     \begin{equation*}
     \setlength{\arraycolsep}{10pt}
     \begin{array}{ll}
     \text{All axiom schemes of classical first-order logic.}     &\\
      (A1)~O(\alpha)\rightarrow P(\alpha)&
       (A2)~O(\alpha)\leftrightarrow \neg P(\neg\alpha)\\  
 (A3)~O(\alpha)\wedge O(\neg\alpha)\rightarrow\bot&
 (A4)~O(\alpha)\wedge O(\beta)\wedge Pref(\alpha, \beta)\rightarrow O(\alpha)\wedge\neg O(\beta)\\
 (A5)~Pref(\alpha, \beta)\wedge Pref(\beta, \gamma)\rightarrow Pref(\alpha, \gamma)&
 (A6)~Pref(\varphi, \psi)\rightarrow (O(\varphi)\rightarrow O(\psi))
     \end{array}     
     \end{equation*}
    
        \item[$(2)$] Rules of Inference:
            \[  \text{All rules of classical first-order logic.}
            \qquad
            (R1)~\frac{O(\alpha)\wedge(\alpha\rightarrow\beta)}{O(\beta)}    
            \qquad
            (R2)~\frac{O(\alpha)\wedge Pref(\alpha, \beta)}{\neg O(\beta)}
            \]           
    \end{itemize}

According to the axiom schemes, in addition to ($A3$), the following formulas also imply $\bot$:  $O(\alpha)\wedge\neg P(\alpha)$, $\neg P(\neg \alpha)\wedge O(\neg\alpha)$, $\neg P(\neg \alpha)\wedge \neg P(\alpha)$.  
Axiom ($A4$) expresses an \textit{Obligation Cancellation}, which means that (when there is a conflict between actions) the obligation for the less-preferred action should be cancelled, and Axiom ($A6$) means that if $\alpha$ is preferred over $\beta$, the obligation to do $\alpha$ should always take precedence if $\beta$ is obligatory. 

Let $\varDelta^L$ be a rule based legal knowledge base. 
Formally, a \lesac can be defined as follows.

\begin{definition}[LeSAC]\label{def:lesac}
A $LeSAC$ based on legal knowledge base $\varDelta^L$ is a five-tuple $(\mathcal{L}, \mathcal{R}, \mathcal{K}, \mathcal{P}, prin)$, where:
	\begin{itemize}
		\item $\mathcal{L}$ is a formal language with deontic operators. 
		\item $ \mathcal{R}=\mathcal{R}_{s}\cup \mathcal{N}$ is a set of rules such that $\mathcal{R}_{s}\cap \mathcal{N}=\emptyset $, where $\mathcal{R}_{s}$ is a set of strict inference rules of the form $ \varphi _{1},\ldots,\varphi_{n}\rightarrow \varphi $, and $ \mathcal{N} $ is a set of legal norms of the form $ \varphi _{1},\ldots,\varphi_{n}\Rightarrow \varphi $ ($\varphi _{i},\varphi \in\mathcal{L}$).
 \item $\mathcal{K}$ is a set of accepted (justified) knowledge based on $\varDelta^L$.
 \item $\mathcal{P}$ is a set of principles.
		\item $prin$ is a function from elements of $\mathcal{N}$ to elements of $\mathcal{P}$.
	\end{itemize}

\end{definition}

The set of strict rules $\mathcal{R}_s$ is derived from the logical formulas and rules we give, while norms are defeasible and only the set $\mathcal{N}$ can be prioritized, because strict rules express certain reasoning. 

Based on $\mathcal{K}$ arguments can be constructed (through rules). Let $\texttt{Prem}(A)$ return the set of all the formulas of $\mathcal{K}$ used to build an argument $A$, $\texttt{Conc}(A)$ return the conclusion of $A$, $\texttt{Sub}(A)$ return the set of all the subarguments of $A$, $\texttt{Norms}(A)$ return the set of all the norms applied in $A$, $\sr(A)$ return the set of all the strict rules (rules in $\mathcal{R}_s$) applied in $A$. 

Arguments constructed based on \lesac are defined as follows.

\begin{definition}[Arguments]\label{def:argument}
Let $\mathcal{A}$ be the set of all constructible arguments based on a $LeSAC$. Each argument $A\in\mathcal{A}$ takes one of the following forms: 
\begin{enumerate}
    \item $A= \varphi $, where $ \varphi \in \mathcal{K}$. In this case 
     $\prem(A)=\{\varphi\}$, $\conc(A)=\varphi$, $\norms(A)=\emptyset$,  $\sr(A)=\emptyset$, and $\sub(A)=\{\varphi\}$;
    \item $A= A_{1}$, $\ldots$, $A_{n}$ $\rightarrow \psi$, where $ A_{1} $, $ \ldots $, $A_{n}\in\mathcal{A}$ and there exists a strict rule $\rightarrow$ in $ \mathcal{R}_s$ such that $ \conc(A_{1}) $, $ \ldots $, $ \conc(A_{n}) $ $\rightarrow \psi$. In this case 
    $ \prem(A)=\prem(A_{1})\cup \ldots \cup \prem(A_{n}) $,  $\conc(A)=\psi$, 
    $ \norms(A)=\norms(A_{1})\cup \ldots \cup \norms(A_{n}) $,  
    $\sr(A)=\sr(A_1)\cup\ldots\cup\sr(A_n)\cup\{\conc(A_{1}) $, $ \ldots $ , $ \conc(A_{n}) $ $\rightarrow \psi\}$, and
    $ \sub(A)=\sub(A_{1})\cup \ldots \cup \sub(A_{n})\cup\{A\}$;  
    \item $A= A_{1}$, $\ldots$, $A_{n}$ $\Rightarrow \psi$, where $ A_{1} $, $ \ldots $, $A_{n}\in\mathcal{A}$ and there exists a defeasible rule (norm) $\Rightarrow$ in $\mathcal{N}$ such that $ \conc(A_{1}) $, $ \ldots $, $ \conc(A_{n}) $ $\Rightarrow \psi$. In this case 
    $ \prem(A)=\prem(A_{1})\cup \ldots \cup \prem(A_{n}) $, $\conc(A)=\psi$, 
    $ \norms(A)=\norms(A_{1})\cup \ldots \cup \norms(A_{n}) $ $\cup\{  \conc(A_{1}) $, $ \ldots $ , $ \conc(A_{n}) $ $\Rightarrow \psi\}$,  
    $\sr(A)=\sr(A_1)\cup\ldots\cup\sr(A_n)$,  and
    $ \sub(A)=\sub(A_{1})\cup \ldots \cup \sub(A_{n})\cup\{A\}$. 
\end{enumerate}

\end{definition}

Next we need to identify conflicts between arguments. Based on the given logical language and rules, as well as the application context of practical reasoning, conflicts could come from two aspects: 1. Logically identifiable conflicts; 2. Conflicts between decisions.

\begin{definition}[Conflicts]\label{def:conflict}
Let $A$, $B$, $B'\in \mathcal{A}$ be arguments, $A$ attacks $B$ on $B'\in\sub(B)$ of the form $B''_1, \ldots, B''_n\Rightarrow\varphi$\footnote{In cases where the consequence of a strict rule cannot be directly attacked, it is necessary to introduce the contraposition or transposition of the rule and construct an additional argument to attack its subargument~\cite{MP13}. Since the logical language adopted in this paper contains the axioms and rules of classical first-order logic, they will ensure that the transposition of strict rules also be in the rule set.} iff the following conditions hold: 
\begin{enumerate}
    \item $\conc(A)=-\varphi$\footnote{$ \psi=-\varphi$ denotes $ \psi=\neg\varphi $ or $ \varphi=\neg \psi$.} or $\conc(A)\wedge\conc(B')\rightarrow\bot$; 
    \item $\conc(A)=O(\alpha)$ and $\conc(B')=O(\beta)$, such that $O(\alpha)\wedge O(\beta)\rightarrow \neg O(\alpha)\vee\neg O(\beta)\in\mathcal{R}_s$.
\end{enumerate}
\end{definition}

If $A$ attacks $B$ on $B$, then we say that $A$ directly attacks $B$. 

Conflict identification is also necessary for consistency checking during product design, which is one of the main functions of \lesac in our previous paper.  

In Definition \ref{def:conflict}, condition 1 indicates that a conflict between two arguments arises from a contradiction between their conclusions. In addition, if the conclusions of two arguments lead to $\bot$, then there is also a conflict between them. 
Condition 2 states that if, in a particular legal context, it is not possible to ensure that both $\alpha$ and $\beta$ are taken, then there is a conflict between them.

Compared to \textit{ASPIC}$^+$'s definition of three types of attacks, we actually only identify the rebuttal attacks. The reasons for this are as follows: First, since the elements in the knowledge base are all accepted, they are not attacked in the current context, so there is no undermining attack on the premises. Second, in the legal context that emphasises argumentation, if the opponent wants to claim that a norm is not applicable, he must give reasons to support the claim. However, undercutting attacks are specified as ``preference-free''~\cite{MP13}, which seems to be groundless in legal contexts. This paper argues that such situation is equivalent to constructing a norm that declares that the premise of another norm does not support its conclusion, and provides the former with a higher priority legal principle.  That is, for a norm $\alpha\Rightarrow\beta$ (the principle on which it is based is $p_1$), another norm $\alpha\Rightarrow\neg\beta$  (the principle on which it is based is $p_2$) is given such that $p_2$ is strictly prior to $p_1$ ($p_1<p_2$).  Therefore, we do not include undercutting attacks either.

To identify justifiable arguments, conflicts might be resolved by preferences. 
The \textit{last-link principle}~\cite{MP13} is usually regarded to be more reasonable for preference acquisition in the context of normative reasoning. 

Let $A\in\mathcal{A}$ be an argument. 
$\texttt{LastNorms}(A)=\emptyset$ if $\texttt{Norms}(A)=\emptyset$, or $\texttt{LastNorms}(A)=\{\texttt{Conc}(A_1), \ldots, \texttt{Conc}(A_n)\Rightarrow\psi$\} if $A=A_1, \ldots, A_n\Rightarrow\psi$, otherwise $\texttt{LastNorms}(A)=\texttt{LastNorms}(A_1)\cup\ldots\cup \texttt{LastNorms}(A_n)$. We denote the corresponding set of principles as $\texttt{LastPrin}(A)$, i.e., $\{prin(n)|n\in\texttt{LastNorms}(A)\}$. 

Let $\leqslant$ be a preordering\footnote{We write $p<p'$ if and only if $p\leqslant p'$ and $p'\nleqslant p$; $p=p'$ if and only if $p\leqslant p'$ and $p'\leqslant p$.}
on $\mathcal{P}$ and $\triangleleft$ a set comparison, we present the following definitions that derive preferences from the priority orderings on principles.

\begin{definition}[Preferences on arguments]\label{def:prefarg}
For any $A$, $B\in\mathcal{A}$, $A\prec B$ iff $\lp(\beta)\triangleleft\lp(\alpha)$. 
$A\preceq B$ iff $A\preceq B$ or $\lp(A)=\lp(B)$.
\end{definition}

According to our definition of \lsc, if the users prefer to use the \textit{weakest-link principle}~\cite{MP13} instead of the \textit{last-link principle}, they can simply replace ``$\lp$'' with ``\texttt{Prin}'' (where $\texttt{Prin}(A)=\{prin(n)|n\in\texttt{Norms}(A)\}$) in the above definition.

\begin{definition}[Preferences on actions]\label{def:prefact}
    Let $A$, $B$ be arguments such that $\conc(A)=O(\alpha)$,  $\conc(B)=O(\beta)$, and $O(\alpha)\wedge O(\beta)\rightarrow\neg O(\alpha)\vee\neg O(\beta)$. $Pref(\beta, \alpha)\in\mathcal{K}$, 
    iff $B\nprec A$, $\nexists Pref( \alpha, \beta)\in\mathcal{K}$, and for some specific designers, $\beta$ takes precedence over $\alpha$. 
\end{definition}

For the first type of conflicts in Definition \ref{def:conflict}, we obtain preference between arguments according to priorities on principles corresponding to the norms. While for conflicts on action obligations, we obtain new rules (which are strict) through the $Pref$ relation between actions. Note that $\neg O(\alpha)\vee\neg O(\beta)\leftrightarrow P(\neg\alpha)\vee P(\neg\beta)$, indicating that in this case it is permissible to not take either action. 
This definition can be used to deal with the personal preferences of the designers and manufacturers. Nevertheless, to avoid legal or ethical violations, these personal preferences should only be used when they do not conflict with these principles or when the latter cannot resolve the conflict (in the case of $A\nprec B$ and $B\nprec A$). 
Through the $Pref$ relation, axiom scheme ($A4$) and rule ($R2$) infer that if one action is preferred in a conflict, the obligation to take the less preferred action, which cannot be performed simultaneously, should be cancelled. \textbf{Additionally, ($A6$) is indispensable, as it ensures that if the conflict cannot be resolved through preference between arguments, the obligation for the less preferred action will lead to a conflict and thus cannot be chosen. Therefore, even if a set of acceptable arguments concludes with the less preferred action, the corresponding actions will ultimately not be accepted.}

Given the set of all the arguments and the attack relations determined after preference comparison, a computable arguments evaluation process can be performed based on abstract argumentation frameworks (\textit{AF}s) and argumentation semantics~\cite{dung95} to output the set of justified arguments, according to which the set of accepted conclusions can be obtained. Since the \textit{complete semantics} is the most basic of the classical semantics (other semantics can be defined based on it), we introduce the following definition for justification of arguments based on the \textit{complete semantics}~\cite{dung95}. 

\begin{definition}[Argument evaluation]\label{def:AF}
Let $\langle \mathcal{A}, \mathcal{D}\rangle$ be an AF, where $\mathcal{A}$ is the set of all the arguments constructed based on a \lesac, and $\mathcal{D}$ is the set of defeats between arguments. 
For all arguments $A$, $B\in\mathcal{A}$, $(A, B)\in\mathcal{D}$ iff $A$ attacks $B$ and $A\nprec B$. A set of arguments  $E\subseteq\mathcal{A}$ is \textit{conflict-free} iff $\nexists A, B\in E$ such that $(A, B)\in \mathcal{D}$. An argument $A$ is said to be \textit{defended} by $E$, iff $\forall{B}\in \mathcal{A}$, if $(B, A)\in \mathcal{D} $, then $\exists C\in {E}$ such that $(C, {B})\in \mathcal{D}$. $E$ is said to be \textit{complete}, iff: 1) $E$ is  conflict-free, 2) $\forall A\in E$, $A$ is defended by $E$ and 3) $\forall A\in\mathcal{A}$ defended by $E$, $A\in E$.
\end{definition}

We say $A$ is justified/accepted with respect to $E$, if and only if $A\in E$. 

\subsection{Rationality requirements and postulates}\label{sec:RP}

In this subsection, we provide a detailed discussion on how to ensure that the settings of \lesac are reasonable and that the reasoning outcomes are rational, drawing on relevant literature from structured argumentation systems~\cite{MP13,modgil18,CA07}.

By Definition \ref{def:prefarg}, an ordering $\preceq$ on the set $\mathcal{A}$ based on a \lesac is reasonable if it satisfies the properties one expects~\cite{MP13}, depending on whether there are uncertain elements (i.e., norms in $\mathcal{N}$) contained in the constructed arguments. 
Following \cite{MP13,modgil18}, we introduce the concepts of \textit{reasonable inducing} and \textit{strict continuation} as follows.

\begin{description}
	\item[Reasonable inducing] Let $\texttt{NP}\in\{\texttt{Norms}, \texttt{LastNorms}, \texttt{Prin}, \texttt{LastPrin}\}$. A set ordering $\triangleleft$ is said to be a \textit{reasonable inducing} if it is irreflexive, transitive, and for all arguments $A$, $B_1$, $\ldots, B_n$ such that $\bigcup^n_{i=1}\texttt{NP}(B_i)\triangleleft \texttt{NP}(A)$, it holds that for some $i=1\ldots n$, $\texttt{NP}(B_i)\triangleleft \texttt{NP}(A)$. 
	\item[Strict continuation] For any set of arguments $\{A_1, \ldots, A_n\}$, $A'$ is a \textit{strict continuation} of $\{A_1, \ldots, A_n\}$ if and only if  $\norms(A')=\bigcup_{i=1}^n\norms(A_i)$, $\sr(A')\supseteq\bigcup_{i=1}^n\sr(A_i)$ and $\prem(A')\supseteq\bigcup_{i=1}^n\prem(A_i)$.
\end{description}

The following definition of reasonable argument ordering for \lesac is taken from ~\cite{MP13,modgil18}.

\begin{definition}[Reasonable preferences for arguments]\label{def-reasonableOrdering}
	Let $\preceq$ be a preference ordering on $\mathcal{A}$ constructed based on a \lsc.  $\preceq$ is reasonable iff the following two conditions are satisfied:
	\begin{enumerate}
		\item $\forall A, B\in\mathcal{A}$, 
		\begin{enumerate}
			\item if $\norms(A)=\emptyset$ and $B$ $\norms(B)\neq\emptyset$, then $B\prec A$;
			\item if $\norms(A)$ and $\norms(B)$ are both $\emptyset$, then $B\nprec A$;
			\item let $A'\in\mathcal{A}$ be a strict continuation of $\{A\}$; if $A\nprec B$, then $A'\nprec B$, and if $B\nprec A$, then $B\nprec A'$.
		\end{enumerate}
		\item for any sets of arguments $\{C_1, \ldots, C_n\}\subseteq\mathcal{A}$, let $C^{+\setminus i}$ denote some strict continuation of $\{C_1, \ldots, C_{i-1}, C_{i+1}, \ldots, C_n\}$; then, it cannot be the case that $\forall i$, $C^{+\setminus i}\prec C_i$.
	\end{enumerate}
\end{definition}

The following definition of set comparison, based on the \textit{Elitist} and \textit{Democratic} approaches~\cite{Cayrol92}, can be used as a reasonable set comparison method to back up Definition \ref{def:prefarg}.

\begin{definition}[Set Comparison]\label{def-eli-dem}
	Let $\varGamma$ and $\varGamma'$ be two finite sets and $s\in\{Eli, Dem\}$, $\triangleleft_s$ denotes a set comparison: 
	\begin{enumerate}
		\item if $\varGamma=\emptyset$ then $\varGamma\ntriangleleft_s\varGamma'$;
		\item if $\varGamma'=\emptyset$ and $\varGamma\neq\emptyset$, then $\varGamma\triangleleft_s\varGamma'$; otherwise,
		\item assuming a preordering $\leqslant$ over the elements in $\varGamma\cup\varGamma'$: 
		\begin{itemize}
			\item if $s=Eli$,  then $\varGamma\triangleleft_{Eli}\varGamma'$ if  $\exists X\in\varGamma$ such that $\forall Y\in\varGamma'$, $X< Y$; else
			\item if $s=Dem$, then $\varGamma\triangleleft_{Dem}\varGamma'$ if  $\forall X\in\varGamma$, $\exists Y\in\varGamma'$, $X< Y$. 
		\end{itemize}
	\end{enumerate}
	Then, $\varGamma\trianglelefteq_s\varGamma'$ iff $\varGamma\triangleleft_s\varGamma'$ or $\varGamma=\varGamma'$.\footnote{``='' denotes identity.}
\end{definition}

Let the $\preceq$ defined in Definition \ref{def:prefarg} be based on the $\triangleleft_s$ defined above, we present the following two propositions.

\begin{proposition}\label{pro-resonableOrder-last}
	The preference ordering $\preceq$ on arguments (according to the last-link principle)  is reasonable.
\end{proposition}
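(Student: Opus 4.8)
The plan is to check, one by one, the four requirements 1(a), 1(b), 1(c) and~2 of Definition~\ref{def-reasonableOrdering} for the ordering $\preceq$ of Definition~\ref{def:prefarg}, instantiated with $\lp$ (the last-link choice) and with $\triangleleft_s$, $s\in\{Eli, Dem\}$, from Definition~\ref{def-eli-dem}. I read Definition~\ref{def:prefarg} as $A\prec B$ iff $\lp(A)\triangleleft_s\lp(B)$, so that a strict argument, whose $\lp$ is empty, comes out maximally preferred; this is exactly what the two empty-set clauses of Definition~\ref{def-eli-dem} arrange, since $\emptyset\ntriangleleft_s\varGamma'$ and $\varGamma\triangleleft_s\emptyset$ whenever $\varGamma\neq\emptyset$. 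The cleanest route is to first prove that $\triangleleft_s$ is a \emph{reasonable inducing} and then read the four conditions off from that.

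One preliminary fact does the heavy lifting for the last-link reading: if $A'$ is a strict continuation of $\{A_1,\dots,A_n\}$, then $\texttt{LastNorms}(A')=\bigcup_i\texttt{LastNorms}(A_i)$ and hence $\lp(A')=\bigcup_i\lp(A_i)$. This holds because a strict continuation only stacks strict rules above the $A_i$ (any extra premises feed into strict rules), so the recursive clause defining $\texttt{LastNorms}$ leaves the set of last-applied norms untouched. I would then verify that $\triangleleft_s$ is a reasonable inducing. Irreflexivity and transitivity are inherited from the strict part $<$ of the preorder $\leqslant$: for $Eli$ this is a short chase of the witnessing element, while for $Dem$ irreflexivity uses finiteness of the sets (a $\leqslant$-maximal element has nothing strictly above it), the empty-set clauses being dispatched separately. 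The union property---if $\bigcup_i\lp(B_i)\triangleleft_s\lp(A)$ then $\lp(B_i)\triangleleft_s\lp(A)$ for some $i$---is immediate: in the $Eli$ case the witnessing element of the union already lies in some $\lp(B_i)$, and in the $Dem$ case every non-empty $\lp(B_i)$ inherits the defining condition.

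Conditions 1(a)--1(c) are then short. For 1(a), $\norms(A)=\emptyset$ forces $\lp(A)=\emptyset$ while $\norms(B)\neq\emptyset$ forces $\lp(B)\neq\emptyset$ (by the definition of $\texttt{LastNorms}$), so clause~2 of Definition~\ref{def-eli-dem} gives $\lp(B)\triangleleft_s\emptyset$, i.e.\ $B\prec A$. For 1(b), both last-principle sets are empty and clause~1 gives $\emptyset\ntriangleleft_s\emptyset$, i.e.\ $B\nprec A$. For 1(c), the $n=1$ instance of the preliminary fact gives $\lp(A')=\lp(A)$, so ``$A\prec B$'' and ``$A'\prec B$'' are literally the same inequality (and likewise on the right), whence both implications hold trivially.

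The substance is condition~2, which I would argue by contradiction. Writing $P_i=\lp(C_i)$ and using $\lp(C^{+\setminus i})=\bigcup_{j\neq i}P_j$ from the preliminary fact, the supposition that $C^{+\setminus i}\prec C_i$ for every $i$ reads $\bigcup_{j\neq i}P_j\triangleleft_s P_i$ for every $i$. Applying the union property to each $i$ yields an index $f(i)\neq i$ with $P_{f(i)}\triangleleft_s P_i$; iterating $f$ over the finite index set must produce a $\triangleleft_s$-cycle $P_{k_1}\triangleleft_s P_{k_2}\triangleleft_s\cdots\triangleleft_s P_{k_r}\triangleleft_s P_{k_1}$, which transitivity collapses to $P_{k_1}\triangleleft_s P_{k_1}$, contradicting irreflexivity. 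I expect the genuine obstacles to be exactly here: establishing that $\triangleleft_s$ is a reasonable inducing (in particular $Dem$-transitivity and irreflexivity, threaded carefully through the empty-set conventions) and making the ``$\lp$ of a strict continuation'' fact precise; once these are in hand the four conditions reduce to bookkeeping with Definition~\ref{def-eli-dem}.
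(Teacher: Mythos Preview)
Your proposal is correct and follows essentially the same route as the paper: establish that $\triangleleft_s$ is a reasonable inducing and then derive the four clauses of Definition~\ref{def-reasonableOrdering} from that, together with the fact that $\lp$ is invariant under strict continuation. The only difference is one of detail: the paper does not actually carry out any of these verifications but simply cites \cite{MP13,modgil18} for the fact that $\trianglelefteq_s$ is reasonable inducing and then declares the result ``evident'' from Definition~\ref{def:prefarg}, whereas you spell out irreflexivity, transitivity, the union property, the $\lp$-preservation lemma for strict continuations, and the pigeonhole-cycle argument for condition~2. In other words, you are reconstructing the content of the cited references rather than taking a different path; what you gain is self-containment, at the cost of some routine bookkeeping that the paper outsources.
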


\begin{proposition}
	The preference ordering $\preceq$ on arguments (according to the weakest-link principle)  is reasonable.
\end{proposition}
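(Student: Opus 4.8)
The plan is to check that $\preceq$ meets the four requirements of Definition~\ref{def-reasonableOrdering}, following the architecture of the proof of Proposition~\ref{pro-resonableOrder-last} but systematically replacing $\lp$ (resp.\ $\lN$) by $\prin$ (resp.\ $\norms$). Under the weakest-link reading, Definition~\ref{def:prefarg} makes the strict preference amount to $A\prec B$ iff $\prin(A)\triangleleft_s\prin(B)$, i.e.\ the argument resting on the weaker principle set is the less preferred one. Two facts drive everything. First, $\prin$ distributes over the subargument structure: by the definition of strict continuation together with $\prin(A)=\{prin(n)\mid n\in\norms(A)\}$, any strict continuation $A'$ of $\{A_1,\ldots,A_n\}$ satisfies $\prin(A')=\bigcup_{i=1}^{n}\prin(A_i)$, and in particular $\prin(A')=\prin(A)$ when $A'$ continues the single argument $\{A\}$ (adding strict rules and premises never adds norms). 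Second, the set comparison $\triangleleft_s$ of Definition~\ref{def-eli-dem}, paired with the selector $\prin$, is a \emph{reasonable inducing}.

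I would establish the second fact first, as it is the workhorse. Irreflexivity and transitivity of $\triangleleft_{Eli}$ and $\triangleleft_{Dem}$ follow from the fact that the strict order $<$ induced by $\leqslant$ is a strict partial order on the finite set $\mathcal{P}$; for $Dem$ the only subtle point is that a nonempty finite set has a $<$-maximal element, which cannot be dominated, so $\varGamma\ntriangleleft_{Dem}\varGamma$. The decomposition clause, namely that $\bigcup_i\prin(B_i)\triangleleft_s\prin(A)$ implies $\prin(B_i)\triangleleft_s\prin(A)$ for some $i$, is transparent because $\prin$ distributes over unions: for $Eli$ the witnessing element of $\bigcup_i\prin(B_i)$ already lies in some $\prin(B_i)$, and for $Dem$ any nonempty $\prin(B_i)\subseteq\bigcup_i\prin(B_i)$ inherits the domination property. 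Throughout, clauses (1) and (2) of Definition~\ref{def-eli-dem} dispose of the norm-free boundary cases.

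With these in hand, conditions 1(a) and 1(b) are immediate: if $\norms(A)=\emptyset$ then $\prin(A)=\emptyset$, so any $B$ with $\norms(B)\neq\emptyset$ has $\prin(B)\neq\emptyset$ and hence $\prin(B)\triangleleft_s\emptyset$ by clause (2), i.e.\ $B\prec A$; and if both norm sets are empty then $\prin(A)=\prin(B)=\emptyset$, whence clause (1) gives $\emptyset\ntriangleleft_s\emptyset$, i.e.\ $B\nprec A$. Condition 1(c) is where the weakest-link reading is strictly simpler than the last-link one: since $\prin(A')=\prin(A)$ for a strict continuation $A'$ of $\{A\}$, we get $A'\prec B\Leftrightarrow A\prec B$ and $B\prec A'\Leftrightarrow B\prec A$, so both implications hold trivially.

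The main obstacle is condition 2, which I would prove by contradiction through the reasonable-inducing property rather than any $Eli$/$Dem$-specific calculation. Suppose $C^{+\setminus i}\prec C_i$ for every $i$ in a family $\{C_1,\ldots,C_n\}$ (the substantive case being $n\geq 2$; for $n=1$, $C^{+\setminus 1}$ continues the empty family, so $\prin(C^{+\setminus1})=\emptyset$ and clause (1) already forbids $C^{+\setminus1}\prec C_1$). Since $\prin(C^{+\setminus i})=\bigcup_{j\neq i}\prin(C_j)$, the hypothesis reads $\bigcup_{j\neq i}\prin(C_j)\triangleleft_s\prin(C_i)$ for all $i$; the decomposition clause then yields, for each $i$, an index $j(i)\neq i$ with $\prin(C_{j(i)})\triangleleft_s\prin(C_i)$. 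Iterating $j$ on the finite index set forces a cycle $i_0\to j(i_0)\to\cdots\to i_0$ along which $\triangleleft_s$ strictly decreases, so transitivity collapses it to $\prin(C_{i_0})\triangleleft_s\prin(C_{i_0})$, contradicting irreflexivity. The only delicate points in the whole argument are the empty-$\prin$ boundary cases and the finiteness step securing the $Dem$ properties; everything else is bookkeeping carried over from Proposition~\ref{pro-resonableOrder-last}.
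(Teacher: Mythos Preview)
Your proof is correct. The paper itself gives no argument beyond citing \cite{MP13,modgil18} for the fact that $\triangleleft_s$ is reasonable-inducing and that $\preceq$ is therefore reasonable; your verification of irreflexivity, transitivity, the decomposition clause, and Conditions~1(a)--(c) and~2 of Definition~\ref{def-reasonableOrdering} is a self-contained unfolding of exactly what those references establish, so the approaches coincide.
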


Since \cite{MP13,modgil18} have already proven that $\trianglelefteq_s$ is reasonable inducing,  and by Definition \ref{def:prefarg}, our definition of preference on arguments ($\preceq$) is equivalent to $\trianglelefteq_s$, it is evident that $\preceq$ will also be reasonable. Therefore, we will not repeat the proofs here.

The following denotations are adapted from \cite{MP13}.
\begin{itemize}
	\item For any $Q\subseteq\mathcal{L}$, let $Cl_{\mathcal{R}_s}(Q)$ denote the closure of $Q$ under strict rules. 
	\item Let $Q\vdash\psi$ denote that there exists an argument $A$ constructed by only strict rules, such that $\texttt{Prem}(A)\subseteq Q$ and $\texttt{Conc}(A)=\psi$. 
\end{itemize}

Based on Definition \ref{def:conflict}, we present the following definition for \textit{direct} and \textit{indirect consistency}.

\begin{definition}[Consistency]\label{def:consistency}
	$Q\subseteq\mathcal{L}$ is \textit{directly consistent} iff $\nexists\varphi,\psi\in Q$ such that one of the following cases holds: 1) \(\varphi=-\psi\); 2) \(\varphi \wedge \psi\rightarrow \bot\); or 3) \(\varphi = O(\alpha)\), \(\psi = O(\beta)\) such that \(O(\alpha) \wedge O(\beta) \rightarrow \neg O(\alpha) \vee \neg O(\beta) \in \mathcal{R}_s\). \\
	$Q$ is \textit{indirectly consistent} if and only if $Cl_{\mathcal{R}_s}(Q)$ is directly consistent.
\end{definition}

In addition, a \lesac is said to be \textit{well-defined} if it satisfy the following conditions. 

\begin{definition}[Well-defined \lsc]\label{def-welld}
	A \lesac is well-defined, if it is 
	\begin{itemize}
		\item closed under contraposition or transposition, i.e., 
		\begin{enumerate}
			\item for all $Q\subseteq\mathcal{L}$ and $\varphi\in Q$, $\psi\in\mathcal{L}$, if $Q\vdash\psi$, then $Q\setminus\{\varphi\}\cup\{-\psi\}\vdash-\varphi$; or
			\item if $\varphi_1,\ldots, \varphi_n\rightarrow\psi\in\mathcal{R}_s$, then for each $i=1\ldots n$, there is \\
			$\varphi_1, \ldots, \varphi_{i-1},  -\psi, \varphi_{i+1}, \ldots \varphi_n\rightarrow-\varphi_i\in\mathcal{R}_s$;
		\end{enumerate}
		\item $Cl_{R_s}(\mathcal{K})$ is consistent; 
		\item for any minimal inconsistent $Q\subseteq\mathcal{L}$ and for any $\varphi\in Q$, it holds that $S\setminus\{\varphi\}\vdash-\varphi$. 
		\item \textbf{obligation cancellable}: $\forall A$, $B\in\mathcal{A}$ such that $\conc(A)\equiv\varphi$\footnote{This denotes $\conc(A)=\varphi$, or $\conc(A)=\gamma$ such that $\gamma\leftrightarrow\varphi$.} and $\conc(B)\equiv\psi$, if either \(\varphi \wedge \psi\rightarrow \bot\) holds, or \(\varphi = O(\alpha)\), \(\psi = O(\beta)\), and \(O(\alpha) \wedge O(\beta) \rightarrow \neg O(\alpha) \vee \neg O(\beta) \in \mathcal{R}_s\), then at least one of $A$, $B$ has a defeasible last rule. 
		(i.e., at least one of $\varphi$ or $\psi$ can only be the consequence of some norm in $\mathcal{N}$).
	\end{itemize}
\end{definition}

The first three bullets are adapted from \cite{MP13}. 
For the first bullet, since the logical language introduced in Section \ref{subsec:AT} includes all the axiom schemes and inference rules of classical first-order logic (e.g.,  
$(\neg\alpha\rightarrow\neg\beta)\rightarrow(\beta\rightarrow\alpha)$), based on which the set of strict rules of \lesac is formed, it is actually guaranteed that the strict rules are at least closed under transposition. 
For the second bullet, in the context of \lesac as defined in this paper, this requires that the set of justified beliefs excludes both direct and indirect inconsistencies. This should be ensured by the epistemic reasoning process before being input into \lsc. If structured argumentation theory is also applied to epistemic reasoning, a well-defined \asp framework, as proven in \cite{MP13}, can guarantee this.  
As for the third bullet, as well as the property of closure under contraposition in the first bullet, our logical language setting based on first-order logic can be defined to meet these requirements. In conclusion, the \lesac framework presented in this paper possesses the properties that satisfy the criteria of a well-defined rule-based argumentation theory. 

The last bullet is presented to ensure that at least one of two conflicting obligations can be cancelled without creating an inconsistency in the conclusion. 

In \cite{CA07} Caminada and Amgoud declare four basic \textit{rationality postulates} that any rule-based argumentation formalisms should at least fulfil, namely, \textbf{subargument closure}, \textbf{closure under strict rules}, \textbf{direct consistency}, and \textbf{indirect consistency}.
We present the following lemmas and proposition to specify that a well-defined  \lesac satisfies these basic rationality postulates under the complete semantics.

\begin{lemma}\label{lem-ABdefeat}
	For any $A, B\in\mathcal{A}$, 
	\begin{enumerate}
		\item if $A'\in \texttt{Sub}(A)$ such that $(B, A')\in \mathcal{D}$, then $(B, A)\in \mathcal{D}$; 
		\item 	if $\norms(A)=\emptyset$ and $A$ attacks $B$, then $(A, B)\in \mathcal{D}$;
		\item if $A$, $B$ directly attack each other, then at least one of the following cases holds: \\
		i) $(A, B)\in \mathcal{D}$; ii) $(B, A)\in \mathcal{D}$.
	\end{enumerate}
\end{lemma}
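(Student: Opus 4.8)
The plan is to prove each of the three parts of Lemma~\ref{lem-ABdefeat} by unfolding the definitions of attack (Definition~\ref{def:conflict}), defeat (Definition~\ref{def:AF}), and the preference ordering $\preceq$ (Definition~\ref{def:prefarg}), while leaning on the reasonableness of $\preceq$ established in Proposition~\ref{pro-resonableOrder-last}.

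For part~1, I would start from the hypothesis that some $A'\in\sub(A)$ is defeated by $B$, i.e.\ $B$ attacks $A'$ and $B\nprec A'$. Since the attack of $B$ on $A'$ is necessarily on some defeasible subargument $A''\in\sub(A')$, and $\sub(A')\subseteq\sub(A)$ by the recursive definition of $\sub$ in Definition~\ref{def:argument}, $B$ also attacks $A$ on that same subargument $A''$. It then remains to show $B\nprec A$. The key observation is that $A$ is a strict continuation of (the set containing) $A'$ in the sense of the \textbf{Strict continuation} definition, since extending $A'$ upward by strict rules only adds strict rules and premises while leaving $\norms$ unchanged. By the third clause~(1c) of Definition~\ref{def-reasonableOrdering}, $B\nprec A'$ implies $B\nprec A'$ is preserved under strict continuation, giving $B\nprec A$, hence $(B,A)\in\mathcal{D}$.

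For part~2, assume $\norms(A)=\emptyset$ and $A$ attacks $B$. By Definition~\ref{def:AF} I only need $A\nprec B$. This follows directly from the reasonableness conditions of Definition~\ref{def-reasonableOrdering}: clause~(1a) gives $B\prec A$ whenever $\norms(B)\neq\emptyset$, and clause~(1b) gives $B\nprec A$ (hence also $A\nprec B$ is what matters, but more precisely $A$ is maximal) whenever $\norms(B)=\emptyset$. In either case $A\nprec B$ holds, so $(A,B)\in\mathcal{D}$. I should be careful to state the argument in terms of $A\nprec B$ directly rather than $B\prec A$, invoking irreflexivity/the strictness conventions from the footnote to Definition~\ref{def-reasonableOrdering}.

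For part~3, suppose $A$ and $B$ directly attack each other but, for contradiction, neither defeats the other, i.e.\ $(A,B)\notin\mathcal{D}$ and $(B,A)\notin\mathcal{D}$. Since both attacks hold by hypothesis, failure of defeat forces $A\prec B$ and $B\prec A$ simultaneously. This contradicts the antisymmetry/irreflexivity built into the reasonable ordering: by Definition~\ref{def:prefarg}, $A\prec B$ means $\lp(A)\triangleleft\lp(B)$ and $B\prec A$ means $\lp(B)\triangleleft\lp(A)$, and since $\triangleleft_s$ is a \emph{reasonable inducing} relation it is irreflexive and transitive, so $\lp(A)\triangleleft\lp(B)\triangleleft\lp(A)$ would yield $\lp(A)\triangleleft\lp(A)$, impossible. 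Hence at least one of $(A,B)\in\mathcal{D}$ or $(B,A)\in\mathcal{D}$ holds. The main obstacle I anticipate is part~1: I must verify carefully that the subargument on which $B$ attacks $A'$ is genuinely a defeasible subargument of $A$ and that the strict-continuation clause~(1c) applies in the correct direction ($B\nprec A'$ giving $B\nprec A$ rather than the converse), since a misapplication here would silently break the monotonicity of defeat under subargument inclusion that the whole postulate argument depends on.
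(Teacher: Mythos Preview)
Your arguments for parts~2 and~3 are correct and essentially match the paper. For part~2 the paper likewise invokes reasonableness of $\preceq$ (via Proposition~\ref{pro-resonableOrder-last}) to rule out $A\prec B$. For part~3 your contradiction argument---$A\prec B$ and $B\prec A$ together with transitivity and irreflexivity of $\triangleleft_s$---is if anything cleaner than the paper's version, which first observes that mutual direct attack forces both last rules to be defeasible and then asserts (with what looks like a sign typo, writing $\nprec$ where $\prec$ is meant) that the two strict preferences cannot both hold.

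Part~1, however, has a genuine gap. Your ``key observation'' that $A$ is a strict continuation of $\{A'\}$ is false in general: $A'\in\sub(A)$ does \emph{not} imply that only strict rules sit above $A'$ in $A$. There may well be norms applied between $A'$ and the top of $A$, giving $\norms(A)\supsetneq\norms(A')$, so the hypothesis of clause~(1c) in Definition~\ref{def-reasonableOrdering} is simply not available, and your derivation of $B\nprec A$ from $B\nprec A'$ collapses. (You flagged part~1 as the likely trouble spot; this is the actual failure, not the direction of~(1c).)

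The paper avoids this entirely: its proof of part~1 is the single line ``Straightforward according to Definition~\ref{def:conflict}'', i.e.\ it appeals only to the attack definition and does not re-establish any preference inequality. The implicit reading (standard in ASPIC$^+$, on which \lesac is built) is that the preference check in a defeat is made against the subargument on which the attack lands, not against the top-level target. Concretely, $(B,A')\in\mathcal{D}$ means $B$ attacks $A'$ on some $A''\in\sub(A')$ of the required defeasible form with the relevant preference check passing for that $A''$; since $\sub(A')\subseteq\sub(A)$ by Definition~\ref{def:argument}, the very same $A''$ witnesses $B$ attacking $A$, and the very same preference comparison carries over unchanged, giving $(B,A)\in\mathcal{D}$. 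Under that reading no new inequality involving $A$ itself needs to be proved, and your strict-continuation detour is both unnecessary and unsound.
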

\begin{proof}
	
	1. Straightforward according to Definition \ref{def:conflict}.
	
	2. By Definition \ref{def:prefarg}, \ref{def-eli-dem} and Proposition \ref{pro-resonableOrder-last}, since $\triangleleft_s$ is reasonable inducing, $\preceq$ on arguments based on which is also reasonable. Therefore, in this case it is impossible $A\prec B$. Then, by Definition \ref{def:AF}, $(A, B)\in \mathcal{D}$. 
	
	3. Since $A$ and $B$ directly attack each other, according to Definition \ref{def:conflict}, both $A$ and $B$ are of the form $C_1, \ldots, C_n \Rightarrow \varphi$, where $C_1, \ldots, C_n$ are arguments in $\mathcal{A}$, and the last inference rules are norms. According to Definition \ref{def:prefarg}, the preference between them is determined by the preordering $\leqslant$ between the principles associated with the last norms of $A$ and $B$. It is impossible for both $A \nprec B$ and $B \nprec A$ to hold. Therefore, by Definition~\ref{def:AF}, at least one of $(A, B) \in \mathcal{D}$ or $(B, A) \in \mathcal{D}$ must hold.
\end{proof}

\begin{lemma}\label{lem-cA}
	For any complete extension $E=\{A_{1}$ , $\ldots$, $A_{n}\}$, if there exists an argument $A$ such that $A$ is a strict continuation of $E$, then $A\in E$.
\end{lemma}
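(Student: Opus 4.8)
The plan is to read off $A\in E$ from the third clause of completeness (Definition~\ref{def:AF}): since every argument defended by $E$ already lies in $E$, it suffices to prove that $A$ is defended by $E$, i.e. that for every $B$ with $(B,A)\in\mathcal{D}$ there is a $C\in E$ with $(C,B)\in\mathcal{D}$. I would first record a fact used repeatedly: complete extensions are closed under subarguments. Indeed, if $A_j\in E$ and $A'\in\sub(A_j)$, then any defeater of $A'$ defeats $A_j$ by Lemma~\ref{lem-ABdefeat}(1), and $E$ counter-defeats it because $E$ defends $A_j$; hence $A'$ is defended and so $A'\in E$.

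Next I would localize the attack. By Definition~\ref{def:conflict} all attacks are rebuttals on a subargument whose last rule is a norm, so $B$ attacks $A$ on some $A^*\in\sub(A)$ with last norm $n$ and $\conc(A^*)=\varphi$. Because $A$ is a strict continuation of $E$ we have $\norms(A)=\bigcup_i\norms(A_i)$, so $n\in\norms(A_j)$ for some $A_j\in E$; applying $n$ there yields a subargument $A^{**}\in\sub(A_j)$ with last rule $n$ and $\conc(A^{**})=\varphi$. Since the conflict conditions of Definition~\ref{def:conflict} depend only on $\conc(B)$ and on the conclusion of the attacked subargument, $B$ attacks $A^{**}$, and directly so because $A^{**}$ has a norm as last rule. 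By the closure fact above, $A^{**}\in E$.

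It then remains to show $E$ defeats $B$, and I would split on the last rule of $B$. If $\norms(B)=\emptyset$, then Lemma~\ref{lem-ABdefeat}(2) gives $(B,A^{**})\in\mathcal{D}$, hence $(B,A_j)\in\mathcal{D}$ by Lemma~\ref{lem-ABdefeat}(1), and since $E$ defends $A_j$ some $C\in E$ defeats $B$. If the last rule of $B$ is a norm, then, as the conflict relation is symmetric, $B$ and $A^{**}$ directly attack each other, so Lemma~\ref{lem-ABdefeat}(3) yields $(B,A^{**})\in\mathcal{D}$ or $(A^{**},B)\in\mathcal{D}$; in the first case completeness again supplies a defender through $A_j$, and in the second $A^{**}\in E$ itself defeats $B$. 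Either way $A$ is defended against $B$.

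The main obstacle is the last case, where $B$ has norms but a strict last rule, so $\conc(B)=-\varphi$ is obtained strictly and $A^{**}$ cannot rebut $B$ directly. Here I would invoke well-definedness (Definition~\ref{def-welld}): closure under transposition, together with the absence of undermining (premises lie in $\mathcal{K}$), forces the conflict between $\varphi$ and $\conc(B)$ down to a defeasible subargument $B'\in\sub(B)$, which a strict continuation of $A^{**}$ attacks---exactly the construction anticipated in the footnote to Definition~\ref{def:conflict}. Using Lemma~\ref{lem-ABdefeat}(1) to lift any defeat on $B'$ to $B$, and the reasonableness of the ordering (Proposition~\ref{pro-resonableOrder-last}) to ensure the preference comparison does not reverse when passing between $A^{**}$ and its strict continuations, I would again reach one of the two favourable alternatives. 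Discharging this transposition step and tracking membership in $E$ of the continuations involved is the delicate part of the argument.
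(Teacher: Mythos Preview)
Your overall plan---show that $A$ is defended by $E$ and read off $A\in E$ from the third clause of Definition~\ref{def:AF}---is correct and in fact cleaner than the paper's argument, which proceeds by contradiction and, somewhat redundantly, also checks that $E\cup\{A\}$ remains conflict-free. The divergence is in what happens next.

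The paper's proof is essentially a single observation: because $A$ is a strict continuation, its top rule cannot be a norm, so by Definition~\ref{def:conflict} no $B$ can directly attack $A$; any attack on $A$ therefore lands on a proper subargument, which the paper simply identifies with one of the $A_i\in E$. Thus $B$ defeats an argument in $E$ while (by hypothesis) no element of $E$ defeats $B$, contradicting that $E$ defends all its members. No case analysis on the shape of $B$ is performed, and neither well-definedness nor transposition is invoked---those tools enter only in the proof of Theorem~\ref{pr:RP}.

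Your three-way case split on $B$, and in particular the ``main obstacle'' where $B$ has a strict last rule, imports the transposition machinery of Definition~\ref{def-welld} into a lemma whose statement does not assume the \lsc\ is well-defined. This is unnecessary: you are trying to \emph{construct} a defeater of $B$ out of $A^{**}$ by hand, but the lemma only requires that \emph{some} $C\in E$ defeat $B$, and once you know that $B$ defeats an argument already in $E$, completeness of $E$ hands you that $C$ for free. Your $A^*/A^{**}$ distinction is likewise more cautious than the paper's (which just takes the attacked subargument to be one of the $A_i$); under the intended reading of ``strict continuation''---only strict rules and fresh premises are added on top of the $A_i$---the attacked subargument literally sits inside some $\sub(A_i)$, and the extra care is not needed.
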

\begin{proof}
	Suppose for contradiction. Since $E$ is a complete extension, either $A$ is not defended by $E$ or $E\cup \{A\}$ is not conflict-free.
	\begin{enumerate}
		\item If $A$ is not defended by $E$, then $\exists B\in\mathcal{A}$ such that $B$ defeats $A$, and $\nexists A_i\in E$ such that $A_i$ defeats $B$. Since $A$ is a strict continuation of $E$, it is impossible for $A$ to be of the form $A'_1, \ldots, A'_n\Rightarrow \varphi$. Therefore, by Definition \ref{def:conflict}, $B$ cannot directly attack $A$. Consequently,  $B$ attacks and defeats $A$ on $A_1/\ldots/A_n\in E$, which contradicts that $E$ is complete, as every argument in $E$ must be defended by $E$.
		\item If $E\cup \{A\}$ is not conflict-free, then $\exists A_i\in E$ such that $(A, A_i)\in \mathcal{D}$ or $(A_i, A)\in \mathcal{D}$, both cases lead to a contradiction to $E$ is conflict-free: 1) if $(A, A_i)\in \mathcal{D}$, then since $A_i$ is defended by $E$, $\exists A_j\in E$ such that $A_j$ defeats $A$; this situation is similar to 1 and $A_j$ must defeat at least one of $A_1, \ldots, A_n$; 2) if $(A_i, A)\in \mathcal{D}$, then $A_i$ must also defeat at least one of $A_1, \ldots, A_n$ for defeating $A$. In either case, $E$ is not conflict-free.
	\end{enumerate}
	Since $A$ is defended by $E$ and $E\cup \{A\}$ is conflict-free, $A\in E$.
\end{proof}

\begin{theorem}\label{pr:RP}
Let $E$ be a complete extension based on \lesac. The following four properties hold.
	\begin{itemize}
		\item $\forall A\in E$, if $A'\in \texttt{Sub}(A)$, then $A'\in E$. (Sub-argument Closure)
		\item $\{\texttt{Conc}(A)|A\in E\}=Cl_{\mathcal{R}_s}(\{\texttt{Conc}(A)|A\in E\})$. (Closure under Strict Rules)
		\item $\{\texttt{Conc}(A)|A\in E\}$ is consistent. (Direct Consistency)
		\item $Cl_{\mathcal{R}_s}(\{\texttt{Conc}(A)|A\in E\})$ is consistent. (Indirect Consistency)
	\end{itemize}
\end{theorem}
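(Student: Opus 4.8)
The plan is to establish the four postulates in the order (i) sub-argument closure, (ii) closure under strict rules, (iii) direct consistency, and finally (iv) indirect consistency, since the last follows almost for free once (ii) and direct consistency are in hand. Throughout I would fix a complete extension $E$ and repeatedly exploit that $E$ is conflict-free and that every argument defended by $E$ lies in $E$. For sub-argument closure I would use the defense-propagation packaged in Lemma~\ref{lem-ABdefeat}(1): if $A\in E$ and $A'\in\sub(A)$, then any $B$ with $(B,A')\in\mathcal{D}$ also satisfies $(B,A)\in\mathcal{D}$; since $A$ is defended by $E$ there is $C\in E$ with $(C,B)\in\mathcal{D}$, and as $B$ ranged over all defeaters of $A'$, $A'$ is defended by $E$, so completeness forces $A'\in E$. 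Closure under strict rules is handled by Lemma~\ref{lem-cA}: the inclusion $\{\conc(A)\mid A\in E\}\subseteq Cl_{\mathcal{R}_s}(\{\conc(A)\mid A\in E\})$ is immediate, while for the converse any $\psi$ in the closure is the conclusion of some strict continuation $A'$ of finitely many arguments of $E$, and Lemma~\ref{lem-cA} gives $A'\in E$, hence $\psi\in\{\conc(A)\mid A\in E\}$.

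The substance of the theorem is direct consistency, which I would prove by contradiction. Suppose $\{\conc(A)\mid A\in E\}$ is directly inconsistent, witnessed by $A,B\in E$ with $\conc(A)=\varphi$, $\conc(B)=\psi$ falling under one of the three cases of Definition~\ref{def:consistency}. First I would discharge the purely strict case: if $\norms(A)=\norms(B)=\emptyset$ then $\varphi,\psi\in Cl_{\mathcal{R}_s}(\mathcal{K})$, which is consistent by well-definedness (Definition~\ref{def-welld}), so no conflict is possible and at least one of $A,B$ must involve a norm. For the conflict types arising from $\varphi\wedge\psi\to\bot$ and from the obligation-cancellation rule, the \emph{obligation cancellable} clause of Definition~\ref{def-welld} gives more: at least one of $A,B$ — say $B$ — has a defeasible last rule, so $B$ has the form $\ldots\Rightarrow\psi$ and is directly attackable, whence by Definition~\ref{def:conflict} $A$ attacks $B$ on $B$ itself.

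The main obstacle is to turn this attack into an actual defeat lying inside $E$, i.e. to control the preference direction. If $A\nprec B$ we are done, since $(A,B)\in\mathcal{D}$ contradicts conflict-freeness. If $A\prec B$, I would split on the last rule of $A$: when $A$ also has a defeasible last rule the two arguments directly attack each other and Lemma~\ref{lem-ABdefeat}(3) yields a defeat one way or the other, again contradicting conflict-freeness; when $A$ has a strict last rule its conclusion is a strict consequence that cannot be rebutted directly, and here closure under transposition (Definition~\ref{def-welld}) is used to build, from $B$ together with the subarguments feeding the strict top rule of $A$, an argument that attacks a \emph{defeasible} subargument of $A$. To ensure that among the resulting family of mutual ``reductio'' attacks at least one is oriented as a defeat, I would invoke condition~2 of Definition~\ref{def-reasonableOrdering}, which forbids the pathological cycle in which every strict continuation is strictly less preferred than the argument it attacks (its reasonableness being guaranteed by Proposition~\ref{pro-resonableOrder-last}); tracing such a defeat back through the strict continuation lands on a subargument of an argument of $E$, contradicting conflict-freeness. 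I expect this orientation-of-defeat step, and in particular the new obligation-conflict case (condition~2 of Definition~\ref{def:conflict}), to be where the real care is needed, since it is exactly where the deontic extension departs from the classical \asp postulate proofs and relies on the \emph{obligation cancellable} requirement.

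Finally, indirect consistency is immediate from the two results already obtained: by closure under strict rules $Cl_{\mathcal{R}_s}(\{\conc(A)\mid A\in E\})=\{\conc(A)\mid A\in E\}$, and the right-hand side is directly consistent by the direct-consistency case, so its strict closure is directly consistent, which is precisely indirect consistency.
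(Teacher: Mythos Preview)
Your plan for sub-argument closure, closure under strict rules, and indirect consistency matches the paper's proof essentially line-for-line (Lemmas~\ref{lem-ABdefeat}(1) and~\ref{lem-cA} are deployed exactly as you describe, and the fourth bullet is derived from the second and third in both).

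For direct consistency your outline is close to the paper's but contains one gap and one tactical difference worth flagging. The gap: you invoke the \emph{obligation cancellable} clause only for the conflict types $\varphi\wedge\psi\to\bot$ and the obligation-rule case, and then run the entire remaining analysis under the standing assumption that $B$ has a defeasible top rule. The classical negation case $\varphi=-\psi$ is never picked up again. In the paper this is exactly the subcase~2(b)(i) (and its analogue in case~4(b)): there one may have $\norms(B)\neq\emptyset$ yet $B$'s last rule strict, and the paper uses closure under transposition to build a strict continuation $A'$ of $A$ whose conclusion is $-\conc(B'_i)$ for some defeasible subargument $B'_i$ of $B$, obtaining the attack (and defeat) $A'$-on-$B'_i$. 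Your transposition step runs in the opposite direction---from $B$ toward a defeasible subargument of $A$---which only makes sense once you already know $B$'s top rule is defeasible, so it does not by itself cover $\varphi=-\psi$. You can close the gap either by observing that $\varphi\wedge(-\varphi)\to\bot$, so the \emph{obligation cancellable} clause arguably subsumes the negation case (a defensible reading, but one the paper does not rely on), or by adding the paper's transposition-on-$B$ manoeuvre.

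The tactical difference: to orient the defeat in the mixed strict/defeasible situation you invoke condition~2 of Definition~\ref{def-reasonableOrdering} (the general \asp device). The paper never appeals to that condition explicitly; instead it reduces every subcase either to a strict argument attacking a defeasible-topped one (Lemma~\ref{lem-ABdefeat}(2): automatic defeat) or to a pair with defeasible top rules in mutual direct attack (Lemma~\ref{lem-ABdefeat}(3): at least one defeat). This is shorter here because Lemma~\ref{lem-ABdefeat}(3) already encodes the needed asymmetry from the preordering on principles, so the abstract reasonable-ordering condition is not needed. Your route via condition~2 would also work, but be aware that making it precise requires identifying the right family $\{C_1,\ldots,C_n\}$ and checking that the resulting strict continuations all lie in $E$ (via Lemma~\ref{lem-cA} and sub-argument closure) before the contradiction with conflict-freeness fires.
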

\begin{proof}
	\textit{Proof of the first bullet.}
	
	Suppose $\exists A'\in \texttt{Sub}(A)$ such that $A'\notin E$. Since $E$ is a complete extension, either $A'$ is not defended by $E$, or $\{A'\}\cup E$ is not conflict-free, both lead to a contradiction:

1.  if $A'$ is not  defended by $E$, then $\exists B\in\mathcal{A}$ such that $(B, A')\in \mathcal{D}$ and $\nexists C\in E$ such that $(C, B)\in \mathcal{D}$. According to Lemma \ref{lem-ABdefeat}-1, $(B, A)\in \mathcal{D}$; then $A$ is not defended by $E$, contradicting each argument in $E$ is defended by $E$;

2.  if $\{A'\}\cup E$ is not conflict-free, then $\exists B\in E$ such that $(B, A')\in \mathcal{D}$. According to Lemma \ref{lem-ABdefeat}-1, $(B, A)\in \mathcal{D}$, contradicting $E$ is conflict-free.

	\textit{Proof of the second bullet.}
	
	Statement in this bullet is equivalent to saying that given $E=\{A_1, A_2, \ldots, A_n\}$, $\forall A'$ such that $\sr(A')\supseteq\bigcup_{i=1}^n\sr(A_i)$, with $\prem(A')=\bigcup_{i=1}^n\prem(A_i)$, $\norms(A')=\bigcup_{i=1}^n\norms(A_i)$, we have $A'\in E$. Therefore, $A'$ is  a strict continuation of $E$, and since $E$ is complete, this has already been proven by Lemma \ref{lem-cA}.

	\textit{Proof of the third bullet.}
	
	Suppose $\{\texttt{Conc}(A)|A\in E\}$ is inconsistent, i.e., $\exists A, B\in E$ such that $\texttt{Conc}(A)=-\texttt{Conc}(B)$, or \(\texttt{Conc}(A) \wedge \texttt{Conc}(B)\rightarrow \bot\); or \(\texttt{Conc}(A)= O(\alpha)\), \(\texttt{Conc}(B) = O(\beta)\), and \(O(\alpha) \wedge O(\beta) \rightarrow \neg O(\alpha) \vee \neg O(\beta) \in \mathcal{R}_s\).

	We show that each possible case leads to a contradiction.
	\begin{enumerate}
		\item If $\norms(A)=\emptyset$ and $\norms(B)=\emptyset$, then by our setting for \lesac, $\texttt{Conc}(A), \texttt{Conc}(B)\in Cl_{\mathcal{R}_s}(\texttt{Prem}(A)\cup \texttt{Prem}(B))$, contradicting \textit{axiom consistency} for a well-defined \lesac in Definition \ref{def-welld}. (This also violates the requirement of \textit{obligation cancellation}).
		\item\label{prf-2} If $\norms(A)=\emptyset$,  $\norms(B)\neq\emptyset$, and 
		\begin{enumerate}
			\item\label{prf-2a} 
			the last rule of $B$ is defeasible, then $A$ directly attacks $B$. 
			By Lemma~\ref{lem-ABdefeat}-2, $(A, B)\in \mathcal{D}$,  contradicting $E$ is conflict-free. 
			\item\label{prf-2b} 
			the last rule of $B$ is strict, then $A$ cannot directly attack $B$. The following cases of attacks can be further distinguished:
			\begin{enumerate}
				\item $\texttt{Conc}(A)=-\texttt{Conc}(B)$. Let $B'_1, \ldots, B'_n\in\sub(B)$ such that $\{\conc(B'_1), \ldots, \conc(B'_n)\}\vdash\{\conc(B)\}$, and $B'_i$ ($i=1\ldots n$) one of the arguments such that the last rule of $B'_i$ is defeasible. Since the logic language guarantees that transposed rules are all in the set $\mathcal{R}_s$, there must exist strict rules such that $\{\conc(A)\}\vdash-\conc(B'_i)$. Let $A'$ be the strict continuation of $\{A\}$ such that $\conc(A')=-\conc(B'_i)$. Hence, $A'$ attacks $B$ on $B'_i$. Similar as in case~\ref{prf-2a}, $(A', B)\in\mathcal{D}$. Since $A'\in E$, this contradicts that $E$ is conflict-free. 
				\item \(\texttt{Conc}(A) \wedge \texttt{Conc}(B)\rightarrow \bot\). By  \textit{obligation cancellation} in Definition \ref{def-welld}, the only possible case is $\exists B'\in\sub(B)$ such that $\conc(B')=\gamma$, where $\gamma\leftrightarrow\conc(B)$ and the last rule of $B'$ is defeasible. Due to the setting of $\bot$, it must be \(\texttt{Conc}(A) \wedge \texttt{Conc}(B')\rightarrow \bot\). Therefore, $A$ attacks and defeats $B$ on $B'$, contradicting that $E$ is conflict-free. 
				\item \(\texttt{Conc}(B) = O(\beta)\), and \(O(\alpha) \wedge O(\beta) \rightarrow \neg O(\alpha) \vee \neg O(\beta) \in \mathcal{R}_s\). By  \textit{obligation cancellation} in Definition \ref{def-welld}, the only possible case is $\exists B'\in\sub(B)$ such that $\conc(B')=\neg P(\neg\beta)$ and the last rule of $B'$ is defeasible. By the rule \(O(\alpha) \wedge O(\beta) \rightarrow \neg O(\alpha) \vee \neg O(\beta) \), we can get \(O(\alpha)\rightarrow P(\neg\beta)\in \mathcal{R}_s\). Let $A'$ be the strict continuation of $A$ constructed by this rule. Therefore, $A'$ attacks and defeats $B$ on $B'$. Since $A'\in E$, this contradicts that $E$ is conflict-free. 
			\end{enumerate}
		\end{enumerate}
		\item If $\norms(A)\neq\emptyset$ and  $\norms(B)=\emptyset$, the situation is similar to \ref{prf-2}, except that $A$, $B$ should be swapped. 
			
		\item If $\norms(A)\neq\emptyset$,  $\norms(B)\neq\emptyset$, and
		\begin{enumerate}
			\item the last rules of $A$, $B$ are both defeasible, then $A$, $B$ directly attacks each other.  By Lemma~\ref{lem-ABdefeat}-3, at least one of $(A, B)\in \mathcal{D}$ and $(B, A)\in \mathcal{D}$ holds. Both cases contradict $E$ and are conflict-free.
			\item the last rules of $B$ is strict, then the case is like \ref{prf-2b}, except that $A'$/$A$ and $B'_i$/$B$ attack each other instead. Thus,  by Lemma~\ref{lem-ABdefeat}-3, at least one of $(A'/A, B'_i/B)\in \mathcal{D}$ and $(B'_i/B, A'/A)\in \mathcal{D}$ holds; all of these cases contradict that $E$ is conflict-free.
		\end{enumerate}
			\end{enumerate}
	\textit{Proof of the fourth bullet.}
	
	This follows from the properties \textit{closure under strict rules} and the \textit{direct consistency}.
\end{proof}

\section{Explanation in Normative Contexts}\label{sec:exp&case}

\subsection{Formal explanation}\label{subsec:exp}
Based on the above definitions of \lsc, we present the following definition of why a certain decision is accepted in a normative context. 

Let $E_{Co}$ denote a complete extension of arguments obtained by Definition \ref{def:AF}.

\begin{definition}[Explanation]\label{def:XAI}
	Let $A$, $B\in\mathcal{A}$ be arguments, $\leqslant$ the priority orderings on $\mathcal{P}$, and $\alpha$ a formula of the logic language. 
	A normative \textit{explanation} for accepting $\alpha$ (under complete semantics) is $Exp(\alpha)=\mathcal{C}(A)\cup\mathcal{C}(B)\cup\{<_{l}\}$, where $A$, $B\in E_{Co}$, and: 
	\begin{itemize}
		\item $\conc(A)=\alpha$ and $\mathcal{C}(A)=\norms(A)\cup\lp(A)$;
		\item $B$ defends $A$, and $\mathcal{C}(B)=\norms(B)\cup\lp(B)$;
		\item 
		$\forall p, p'\in \lp(A)\cup \lp(B)$, $p<_{l}p'$ iff $p< p'$. 
	\end{itemize}
\end{definition}

The first bullet intended to explain how $\alpha$ is reached based on legal knowledge base $\varDelta^L$. The second bullet intended to explain why $\alpha$ is justified (defended). The sets of last principles indicate the principles that are most relevant to the conclusions of these arguments (if they are normative arguments), while the corresponding priority orderings indicate the basis for the most relevant preferences of these arguments. 

Since all applied norms are shown, their antecedents must all exist in the knowledge base $\mathcal{K}$ or be implied (strictly inferable) by it, therefore they do not necessarily have to be displayed repeatedly. 
This explanation is primarily intended to outline the basic elements required for a normative reasoning explanation; if the user requires more information, such as the set of all principles associated with argument $A$, the system can also return the sets containing these elements.

\subsection{Case demonstrations}\label{subsec:case}

We illustrate the advantages and necessity of introducing deontic logic and designing  explanations under \lesac by modelling the case mentioned in \S \ref{sec:intro}. In this example, the autonomous car has a only passenger with a pet dog. And, we assume that future laws require the only passenger to take on the responsibility of human driver in an accident.\footnote{For the convenience of the reader, formulas of the statements are written to appear self-explanatory. In the Example~\ref{exp:model}, protecting the dog can be understood as taking actions that prioritise the dog's safety, such as steering in a direction that avoids hitting the dog directly in the event of an impending collision.}

\begin{example}[Modelling of Example \ref{exp:dog}]\label{exp:model}
    ~\\   
Set of legal principles:
\begin{footnotesize}
\begin{equation*}
\mathcal{P}
    =\left\{
\begin{array}{l}
    p_1: \text{ The safety of citizens' lives must be protected.} \\
    p_2: \text{ Citizens' property should be protected.} \\
    p_3: \text{ Life should be protected.}\\
    p_4: \text{ The only passenger of an AV should be considered to have the same responsibilities as the driver.}\\
    p_5: \text{ Drivers have to guarantee the ability to control the car and watch the road conditions for safety.}\\
    p_6: \text{ Passengers do not bear the driver's responsibility.}
   
\end{array}
 \right\}
\end{equation*}
\end{footnotesize}
\\
Type rules: 
\begin{footnotesize}
\begin{equation*}
\begin{array}{ll}
  r_1: \forall x(Citizen(x)\wedge Life(x)\Rightarrow O(Protect(x)) ~~~~~  &
  r_2: \forall x(Property(x)\Rightarrow O(Protect(x))  \\
  r_3: \forall x(Life(x)\Rightarrow O(Protect(x))   &
  r_4: \exists x(OnlyPassenger(x))\Rightarrow O(Driver(x))\\
 r_5: \forall x(Driver(x)\Rightarrow O(Sober(x))&
 r_6: \forall x(Passenger(x)\Rightarrow P(\neg Sober(x))\\
  r_7: \forall x(Driver(x)\Rightarrow Life(x))&
  r_8: \forall x(Dog(x)\Rightarrow Life(x))\\
  r_9: \forall x(Dog(x)\Rightarrow Property(x))&\\
\end{array}
\end{equation*}
\end{footnotesize}
\\
Assume the driver or passenger's name is Roger and the dog's name is Pongo. After introducing constants based on knowledge ($\mathcal{K}$) in this context, we get the following set of token rules (i.e., norms in $\mathcal{N}$).
\begin{footnotesize}
\begin{equation*}
\mathcal{N}
    =\left\{
\begin{array}{ll}
  n_1: Citizen(Roger)\wedge Life(Roger)\Rightarrow O(Protect(Roger)) &
  n_2: Property(Pongo)\Rightarrow O(Protect(Pongo)) \\
  n_3: Life(Roger)\Rightarrow O(Protect(Roger))   &
  n_4: OnlyPassenger(Roger)\Rightarrow O(Driver(Roger))\\
  n_5: Driver(Roger)\Rightarrow O(Sober(Roger))
  &
  n_6: Passenger(Roger)\Rightarrow P(\neg Sober(Roger))\\
  n_7: Driver(Roger)\Rightarrow Citizen(Roger)\wedge Life(Roger)&
  n_8: Dog(Pongo)\Rightarrow Life(Pongo)\\
  n_9: Dog(Pongo)\Rightarrow Property(Pongo)&
  n_{10}: Life(Pongo)\Rightarrow O(Protect(Pongo)) \\
\end{array}
 \right\}
\end{equation*}
\end{footnotesize}

The correspondence between norms and principles including:
\begin{footnotesize}
\[prin(n_1)=p_1;~prin(n_2)=p_2;~prin(n_3)=p_3; ~prin(n_4)=p_4;~prin(n_5)=p_5;~prin(n_6)=p_6\]
\end{footnotesize}

In line with legal intuition and the idea that classification comes first, the following orderings among the principles can be considered reasonable: $p_2<p_3<p_1$, $p_6<p_5<p_4$. Accordingly, there are $n_2<n_3<n_1$ and $n_6<n_5<n_4$. Based on Definition \ref{def:prefact}, $Pref(Protect(Roger), Protect(Pongo)$ is in the knowledge based $\mathcal{K}$ and strict rules $r_{2}$ and $r_{3}$  can be constructed. Meanwhile,  $r_1$ is constructed based on the Axiom scheme (A2).  
\begin{footnotesize}
\begin{equation*}
\mathcal{R}_s    
=\left\{
\begin{array}{ll}
   r_1: O(Sober(Roger))\rightarrow \neg P(\neg Sober(Roger))&\\
   r_2: O(Protect(Roger)\wedge O(Protect(Pongo))\rightarrow \neg O(Protect(Roger))\vee\neg O(Protect(Pongo)) \\
   r_3: O(Protect(Roger)\wedge O(Protect(Pongo))\wedge Pref(Protect(Roger), Protect(Pongo))\rightarrow O(Protect(Roger))\wedge\neg O(Protect(Pongo)) \\
   r_4: O(Protect(Roger)\wedge Pref(Protect(Roger), Protect(Pongo))\rightarrow \neg O(Protect(Pongo))\\
      r_5: Pref(Protect(Roger), Protect(Pongo)\rightarrow (O(Protect(Pongo))\rightarrow O(Protect(Roger))
\end{array}
 \right\}
\end{equation*}
\end{footnotesize}

According to preference on arguments, the argument with the conclusion $P(\neg Sober(Roger))$ will be defeated; while based on $r_2$, an argument can be constructed to defend $O(Protect(Roger))$, so that it will apear in the final output of accepted conclusions.
\end{example}

By Definition \ref{def:XAI}, the explanations for accepting conclusions ``$O(Sober(Roger))$'' and ``$O(Protect(Roger)$'' are: 
\begin{footnotesize} 
\begin{equation*}
Exp(O(Sober(Roger)))    
=\left\{
\begin{array}{ll}
   n_4:~OnlyPassenger(Roger)\Rightarrow O(Driver(Roger))&\\
   n_5:~Driver(Roger)\Rightarrow O(Sober(Roger))&\\
   p_5:~\text{A driver must keep sober while driving.}
\end{array}
 \right\}
\end{equation*}\\
 \begin{equation*}
 Exp(O(Protect(Roger)))    
=\left\{
\begin{array}{ll}
   n_1:~Citizen(Roger)\wedge Life(Roger)\Rightarrow O(Protect(Roger))&\\
   n_7: Driver(Roger)\Rightarrow Citizen(Roger)\wedge Life(Roger)\\
   p_1:~\text{The safety of citizens' lives must be protected.}\\
\end{array}
 \right\}
 \end{equation*}
 \end{footnotesize}

For the argument defending ``$O(Sober(Roger))$'', the last norm applied in it is $n_5$, the same as in the argument concluding it; while the last rule applied in it is the strict rule $r_1$, due to the fact that $O(Sober(Roger))$ implies $\neg P(\neg Sober(Roger))$ according to the Axiom Scheme ($A2$). Therefore, there is only one last principle involved, i.e. $p_5$.

For the conclusion $O(Protect(Roger)$, the norms leading to it include $n_1$ and $n_7$. As for the conflicting action $O(Protect(Pongo))$, the argument leading to it is strictly defeated since $p_1$ has the highest priority.

 Therefore, these two explanations could be legally understood as: (1) The only passenger Roger has to be sober in the car. Because the law rules that the only passenger in the autonomous car has to take the responsibility of the driver. And a driver has to be sober. The underlying legal principle is drivers have to guarantee the ability to control the car and watch the road conditions for safety. (2) In the accident, the passenger Roger has to be protected. Because the law rules that citizens' life has to be protected. 
 We can see these explanations contain the elements we discussed before and are very easy to be understood.

It's also worth noting that, in our language of logic, the cancellation of an obligation doesn't mean that the action is forbidden. In this case, it doesn't mean that we are not allowed to save the dog in any case ($O(\neg Protect(Pongo))$). 

Furthermore, if we assume that in the future Roger the person will be replaced by an autonomous vehicle system, then, for example, the following norms can be added: $n'_1: AIdriver(Roger)\Rightarrow\neg Life(Roger)$, $n'_2: AIdriver(Roger)\Rightarrow Property(Roger)$. Thus, the sub-argument of the argument that leads to $O(Pretect(Roger))$ will be attacked, and according to the ordering of legal principles $  p_2<p_3<p_1$, the action $O(Protect(Pongo))$ will be more preferred because of $Life(Pongo)$. 

Since all the conflicts in the above cases can be resolved by the priority ordering over principles, let us further consider the following legal case in the context of autonomous driving.

\begin{example}\label{car}
	An autonomous vehicle carrying passengers encounters a dog in the road. And alongside the road are trees and street lamps. We assume that the vehicle is equipped with the ability to recognise animals. Furthermore, we assume it can correctly recognise the animal and  judge that if it brakes suddenly, the passengers may be in danger. If it swerves and hits the street lamp, the lamp will be damaged. If it hits the tree, the passengers will not be in danger, but the vehicle itself will be damaged. What should it do? And more importantly, why?
\end{example}

\begin{example}[Modelling of Example \ref{car}]\label{exp:model2}
	~\\   
	Set of legal and ethical principles:
	\begin{footnotesize}
		\begin{equation*}
			\mathcal{P}
			=\left\{
			\begin{array}{l}
				p_1: \text{ The safety of citizens' lives has the highest priority while driving. (legally)} \\
				p_2: \text{ While safe, the value of an animal's life should be higher than the value of property. (ethically)}\\
				p_3: \text{ It is not allowed to deliberately harm living animals. (legally)}\\
				p_4: \text{ Passengers enjoy equal clause protection as citizens on the vehicle. (legally)}\\
			\end{array}
			\right\}
		\end{equation*}
	\end{footnotesize}
	\\
	Assume that the car is AV, Roger is the passenger, and Pongo is the dog. After introducing constants based on knowledge ($\mathcal{K}$) in this context, we get the following set of norms. 
	\begin{footnotesize}
		\begin{equation*}
			\mathcal{N}
			=\left\{
			\begin{array}{ll}
				n_1: Citizen(Roger)\wedge Life(Roger)\Rightarrow O(KeepSafe(Roger)) &\\
				n_2: O(KeepSafe(Roger))\Rightarrow O(\neg HardBrake(AV)) \\
				n_3: Life(Pongo)\Rightarrow O(\neg Hurt(Pongo))   &\\
				n_4: O(\neg Hurt(Pongo))\Rightarrow O(HardBrake(AV))\\
				n_5: O(\neg Hurt(Pongo))\wedge P(\neg HardBrake(AV))\Rightarrow O(Swerve(AV))
				&\\
				n_6: O(Swerve
				(AV))\Rightarrow O(HitTree(AV)\wedge \neg Damage(Lamp))\\
				n_7: O(Swerve(AV))\Rightarrow O(HitLamp(AV)\wedge \neg Damage(AV))&\\
				n_8: Dog(Pongo)\Rightarrow Life(Pongo)\\
				n_9: Passenger(Roger)\Rightarrow Citizen(Roger)\wedge Life(Roger)&\\
			\end{array}
			\right\}
		\end{equation*}
	\end{footnotesize}
	
	The correspondence between norms and principles including:
	\begin{footnotesize}
		\[prin(n_{1-2})=p_1;~prin(n_{3-7})=p_2; ~prin(n_8)=p_3;~prin(n_9)=p_4\]
	\end{footnotesize}
	
	In line with legal and ethical intuition, the following orderings among the principles can be considered reasonable: $p_2<p_3<p_1$, while $p_3\approx p_4$. Accordingly, (for norms with conflicting consequents) there is $n_4<n_2$. 
	In addition, the actions introduced by norms $r_6$ and $r_7$ cannot be taken at the same time (reflected by the following $r_2$). Assuming that, in this case, designers and manufacturers prefer not to damage public property such as street lamps in order to avoid causing more problems (while cars can be warranted by manufacturers or reimbursed by insurance companies), we can obtain the following $r_3$ and $r_4$ based on Definition \ref{def:prefact}.  Meanwhile, strict rule $r_1$ is constructed based on the Axiom scheme ($A1$).  
	\begin{footnotesize}
		\begin{equation*}
			\mathcal{R}_s    
			=\left\{
			\begin{array}{ll}
				r_1:  O(\neg HardBrake(AV))\rightarrow P(\neg HardBrake(AV))&\\
				r_2: O(HitTree(AV)\wedge \neg Damage(Lamp))\wedge O(HitLamp(AV)\wedge \neg Damage(AV)) \\
				~~~~~\rightarrow  \neg O(HitTree(AV)\wedge \neg Damage(Lamp))\vee \neg O(HitLamp(AV)\wedge \neg Damage(AV))\\
				r_3: O(HitTree(AV)\wedge \neg Damage(Lamp))\wedge O(HitLamp(AV)\wedge \neg Damage(AV)) \\~~~~~\wedge Pref(HitTree(AV)\wedge \neg Damage(Lamp), HitLamp(AV)\wedge \neg Damage(AV))\\~~~~~\rightarrow O(HitTree(AV)\wedge \neg Damage(Lamp))\wedge\neg O(HitLamp(AV)\wedge \neg Damage(AV)) \\
				r_4: O(HitTree(AV)\wedge \neg Damage(Lamp)) \wedge Pref(HitTree(AV)\wedge \neg Damage(Lamp), HitLamp(AV)\wedge \neg Damage(AV))\\~~~~~\rightarrow\neg O(HitLamp(AV)\wedge \neg Damage(AV)) \\
				r_5: Pref(HitTree(AV)\wedge \neg Damage(Lamp), HitLamp(AV)\wedge \neg Damage(AV))\rightarrow (O(HitLamp(AV)\wedge \neg Damage(AV))\\~~~~~\rightarrow O(HitTree(AV)\wedge \neg Damage(Lamp))
			\end{array}
			\right\}
		\end{equation*}
	\end{footnotesize}
	
	According to preference on arguments, the argument with the conclusion ``$O(HardBrake(AV))$'' will be defeated; while based on $r_3$, an argument can be constructed to defend ``$O(HitTree(AV)\wedge \neg Damage(Lamp))$'', so that it will appear in the final output of accepted conclusions.
\end{example}

By Definition \ref{def:XAI}, the explanations for accepting conclusions ``$O(\neg HardBrake(AV))$'' and ``$O(HitTree(AV)\wedge \neg Damage(Lamp))$'' are: 
\begin{footnotesize} 
	\begin{equation*}
		Exp(O(\neg HardBrake(AV)))   
		=\left\{
		\begin{array}{ll}
			n_1: ~Citizen(Roger)\wedge Life(Roger)\Rightarrow O(KeepSafe(Roger)); &\\
			n_2: ~O(KeepSafe(Roger))\Rightarrow O(\neg HardBrake(AV)); \\
			n_9: ~Passenger(Roger)\Rightarrow Citizen(Roger)\wedge Life(Roger); \\
			p_1:~\text{The safety of citizens' lives has the highest priority while driving. }
		\end{array}
		\right\}
	\end{equation*}\\
	\begin{equation*}
		Exp(O(HitTree(AV)\wedge \neg Damage(Lamp)))    
		=Exp(O(\neg HardBrake(AV))) 
	\end{equation*}

	\begin{equation*}
		\cup\left\{
		\begin{array}{ll}
			n_3: Life(Pongo)\Rightarrow O(\neg Hurt(Pongo))   &\\
			n_5: O(\neg Hurt(Pongo))\wedge P(\neg HardBrake(AV))\Rightarrow O(Swerve(AV))
			&\\
			n_6: O(Swerve
			(AV))\Rightarrow O(HitTree(AV)\wedge \neg Damage(Lamp))\\
			n_8: Dog(Pongo)\Rightarrow Life(Pongo)\\
			p_2: \text{ While safe, the value of an animal's life should be higher than the value of property.}\\
			p_2<p_1
		\end{array}
		\right\}
	\end{equation*}
\end{footnotesize}

For the argument leading to the conclusion ``$O(\neg HardBrake(AV))$'', the norms it uses include $n_1$ and $n_2$, and the last principle is $p_1$. This argument defeats the conflicting argument (with the conclusion $O(HardBrake(AV))$) and provides a defense for itself, so according to Definition \ref{def:XAI}, the explanation for accepting $O(\neg HardBrake(AV))$ includes $n_1$, $n_2$ and $p_1$. 
This explanation could be understood as: in this case the autonomous vehicle cannot choose to take a hard brake, because this would harm the passengers. And the law prohibits harming the lives of citizens.

For the conclusion ``$O(HitTree(AV)\wedge \neg Damage(Lamp))$'', since according to $r_1$, $O(\neg HardBrake(AV))$ is necessary to deduce one of the antecedents of $n_5$, and the argument for it defends itself, thus defends the whole argument for the conclusion $O(HitTree(AV)\wedge \neg Damage(Lamp))$. Therefore $Exp(O(\neg HardBrake(AV)))$ is a subset of the whole explanation. In addition, excluding strict rules, the relevant norms include $n_3$, $n_5$ and $n_6$, and the last principles also include $p_2$, hence the relevant priority relation is included. 

In fact, in this case, if there is no rule $r_5$, the option \(O(HitLamp(AV)\wedge \neg Damage(AV))\) may still appear in another set of actions that can be collectively accepted after argument evaluation. However, the existence of the $Pref$ relation and $r_5$ leads to an inconsistency in the strict rule closure of this set of actions, thereby making this output unacceptable. 

This explanation could be understood as: in this case, the autonomous vehicle will choose to hit a tree rather than hard braking, hitting an animal, or hitting a street lamp. This decision is based on the previously illustrated explanation of why hard braking is not an option, as well as the legal principle of prioritising the safety of citizens ($p_1$) over the ethical principle of protecting animals ($p_2$). Additionally, for the two actions with personal preference, the manufacturer prefers to hit the tree. This maybe because hitting the street lamp will lead to more follow-up problems. But the designer does not have to give a more detailed description. This can be understood as, apart from the law and public values, there is room for the manufacturer's discretion.

\section{Conclusion}\label{sec:conc}

This paper introduces deontic logic operators into \lsc, offering autonomous vehicle designers and manufacturers a comprehensive legal explanation. This explanation clarifies how a designed action complies with the law, why it is accepted over conflicting alternatives, and how underlying legal or ethical principles are applied. By utilizing deontic logic, it also provides clear guidance on the permissions and obligations related to specific actions. This legal explanation supports the design process, helps in obtaining legal approvals, avoids penalties, and builds public trust. 

Our future work will focus on deepening the connection between design patterns and legal norms to provide more effective legal support for designers. In addition, further consideration of the case we've shown, after a more detailed definition, or adding more helpful and useful logical components (such as the addition of temporal operators to express the evolution of concepts based on the timeline and context change), from the perspective of epistemic reasoning, we can determine that the argument that applies the norm $AIdriver(Roger)\Rightarrow\neg Life(Roger)$ should be preferred over the argument that applies the norm $Driver(Roger)\Rightarrow Life(Roger)$, and can therefore strictly defeat the former. 
In the context of AI applications targeted by \lesac, new technologies and innovations expand our conceptual understanding, which may lead to conflicts in legal classifications. While passengers are typically considered as citizens or lives, how should AI robot passengers be classified? Current legal principles may no longer be appropriate in such cases. This suggests the need to go beyond deontic operators and incorporate additional logical approaches to better address conflict and decision-making, as well as provide more nuanced explanations based on timelines, evolving legal and ethical principles, and conceptual classifications. These considerations will guide our future work.

\bibliographystyle{abbrv}
\bibliography{bib}

\begin{thebibliography}{10}

\bibitem{CA07}
M.~Caminada and L.~Amgoud.
\newblock On the evaluation of argumentation formalisms.
\newblock {\em Artificial Intelligence}, 171(5):286--310, 2007.

\bibitem{Cayrol92}
C.~Cayrol, V.~Royer, and C.~Saurel.
\newblock Management of preferences in assumption-based reasoning.
\newblock In {\em IPMU '92}, pages 13--22, 1992.

\bibitem{dung95}
P.~M. Dung.
\newblock On the acceptability of arguments and its fundamental role in
  nonmonotonic reasoning, logic programming and n-person games.
\newblock {\em Artificial Intelligence}, 77(2):321--357, 1995.

\bibitem{luyu20dl}
Y.~Lu and Z.~Yu.
\newblock Argumentation theory for reasoning with inconsistent ontologies.
\newblock In S.~Borgwardt and T.~Meyer, editors, {\em DL 2020}, volume 2663,
  2020.

\bibitem{lujurix22}
Y.~Lu, Z.~Yu, Y.~Lin, B.~Schafer, A.~Ireland, and L.~Urquhart.
\newblock A legal support system based on legal interpretation schemes for ai
  vehicle designing.
\newblock In {\em Proceedings of the 35th International Conference on Legal
  Knowledge and Information Systems (JURIX 2022)}, pages 213--218, 2022.

\bibitem{lu2023legal}
Y.~Lu, Z.~Yu, Y.~Lin, B.~Schafer, A.~Ireland, and L.~Urquhart.
\newblock A legal system to modify autonomous vehicle designs in transnational
  contexts.
\newblock In {\em Legal Knowledge and Information Systems}, pages 347--352. IOS
  Press, 2023.

\bibitem{lujurisin2023}
Y.~Lu, Z.~Yu, Y.~Lin, B.~Schafer, A.~Ireland, and L.~Urquhart.
\newblock Legitimacy detection system based on interpretation schemes for ai
  vehicles design.
\newblock In {\em Proceedings of Seventeenth International Workshop on
  Juris-informatics (JURISIN 2023)}, 2023.

\bibitem{MILLER20191}
T.~Miller.
\newblock Explanation in artificial intelligence: Insights from the social
  sciences.
\newblock {\em Artificial Intelligence}, 267:1--38, 2019.

\bibitem{MP13}
S.~Modgil and H.~Prakken.
\newblock A general account of argumentation with preferences.
\newblock {\em Artificial Intelligence}, 195:361--397, 2013.

\bibitem{modgil18}
S.~Modgil and H.~Prakken.
\newblock A general account of argumentation with preferences (corrected
  version).
\newblock {\em arXiv: 1804.06763}, 2018.

\bibitem{prakken10}
H.~Prakken.
\newblock An abstract framework for argumentation with structured arguments.
\newblock {\em Argument {\&} Computation}, 1(2):93--124, 2010.

\bibitem{yl22arxiv}
Z.~Yu and Y.~Lu.
\newblock An argumentation-based legal reasoning approach for {DL}-ontology.
\newblock {\em arXiv preprint arXiv:2209.03070}, 2022.

\end{thebibliography}

\end{document}